\newtheorem{proposition}{Proposition}
\newtheorem{theorem}{Theorem}
\title{CPGD: Toward Stable Rule-based Reinforcement Learning for Language Models}
\author{%
\begin{minipage}{\textwidth}
    \vspace{2em}
    \centering
    Zongkai Liu\textsuperscript{1,2}\thanks{Equal contribution} \quad Fanqing Meng\textsuperscript{4*} \quad Lingxiao Du\textsuperscript{3*} \quad Zhixiang Zhou\textsuperscript{2*} \\
    Chao Yu\textsuperscript{1$\dagger$} \quad Wenqi Shao\textsuperscript{2,3$\dagger$} \quad Qiaosheng Zhang\textsuperscript{2,3}\thanks{Corresponding Authors: \{zhangqiaosheng, shaowenqi\}@pjlab.org.cn; yuchao3@mail.sysu.edu.cn}
    \\[8pt]
    \textsuperscript{1}Sun Yat-Sen University \quad 
    \textsuperscript{2}Shanghai Innovation Institute \quad \\
    \textsuperscript{3}Shanghai AI Laboratory \quad
    \textsuperscript{4}Shanghai Jiao Tong University
\end{minipage}
}
\begin{document}


\maketitle

\begin{abstract}
  Recent advances in rule-based reinforcement learning (RL) have significantly improved the reasoning capability of language models (LMs) with rule-based rewards. However, existing RL methods---such as GRPO, REINFORCE++, and RLOO---often suffer from training instability, where large policy updates and improper clipping can lead to training collapse. To address this issue, we propose Clipped Policy Gradient Optimization with Policy Drift (CPGD), a novel algorithm designed to stabilize policy learning in LMs. CPGD introduces a policy drift constraint based on KL divergence to dynamically regularize policy updates, and leverages a clip mechanism on the logarithm of the ratio to prevent excessive policy updates. We provide theoretical justification for CPGD and demonstrate through empirical analysis that it mitigates the instability observed in prior approaches. Furthermore, we show that CPGD significantly improves performance while maintaining training stability. Our implementation balances theoretical rigor with practical usability, offering a robust alternative for RL in the post-training of LMs. We release our code at~\url{https://github.com/ModalMinds/MM-EUREKA}. 
\end{abstract}

\section{Introduction}\label{sec: introduction}


Rule-based reinforcement learning (RL) has emerged as a key approach for eliciting reasoning capabilities in language models (LMs)~\citep{deepseekai2025}. It leverages simple, efficient reward functions derived from deterministic rules, effectively mitigating reward hacking~\citep{gao2022scalinglawsrewardmodel} while activating reasoning abilities of models~\citep{deepseekai2025,polu2020generative, le2022coderl,shinn2023reflexion}. This has sparked a line of research focused on developing more effective RL algorithms for both textual and general multimodal reasoning tasks. Notable methods include GRPO~\citep{deepseekai2025}, REINFORCE++~\citep{hu2025reinforce++}, RLOO~\citep{RLOOKoolHW19a, RLOOAhmadianCGFKPUH24}, and GRPO variants such as DAPO~\citep{yu2025dapoopensourcellmreinforcement}, Dr.GRPO~\citep{liu2025drgrpounderstandingr1zeroliketrainingcritical}, and GPG~\citep{chu2025gpgsimplestrongreinforcement}. However, we observe that these RL methods often suffer from training instability, which we attribute to the use of \textit{importance-sampling ratios} in their loss functions. Although PPO-clip loss~\citep{PPO} is commonly adopted to mitigate extreme policy updates, its one-sided nature fails to constrain large ratios when the advantage is negative---potentially causing gradient explosions dominated by poor samples, leading to catastrophic training collapse. We theoretically show that incorporating the importance-sampling ratio in the loss can amplify the policy shift, and our empirical results confirm that this can lead to training collapse in existing RL methods.

To address this issue, we propose \textit{Clipped Policy Gradient Optimization with Policy Drift} (CPGD), an algorithm that replaces the PPO-clip loss with a policy gradient loss~\citep{sutton1998reinforcement} to avoid instability caused by directly involving policy ratios in the loss function. To ensure proximal optimization, we introduce both a clip mechanism and a policy drift regularizer, constraining optimization within a local region and mitigating over-optimization that may impair reasoning behaviors as shown in Section~\ref{sec: training collapse}. Furthermore, we develop a novel KL estimator that ensures correct gradient directions while avoiding the potential numerical instability associated with the commonly used $k_3$ estimators~\citep{schulman2023approximating}. We also incorporate weighted advantages to dynamically adjust the influence of each sample, further enhancing model performance.

We theoretically prove the convergence of CPGD and empirically demonstrate its superior training stability and performance. As shown in Table~\ref{tab:benchmark_comparison}, models trained with CPGD consistently outperform those trained with other RL algorithms and strong open-source baselines across standard multimodal reasoning benchmarks. Notably, CPGD improves the overall performance over the base model by +11.0\% across all benchmarks. Specially, CPGD achieves +21.8\% gain on the in-domain benchmark MMK12~\citep{meng2025mmeurekaexploringfrontiersmultimodal}, and improves by +8.5\% and +11.4\% on the out-of-distribution benchmarks MathVista~\citep{lu2024mathvistaevaluatingmathematicalreasoning} and MathVision~\citep{wang2024measuringmultimodalmathematicalreasoning}, respectively.

\section{Related work}

\textbf{RL for training reasoning models.}
RL has become a key method for improving reasoning in LMs~\citep{deepseekai2025, openai2024o1}. While early methods rely on PPO~\citep{PPO}, its high computational cost has driven interest in alternatives like DPO~\citep{rafailov2023direct}, which simplifies training but depends on offline data. Recent RL methods such as GRPO, RLOO, and REINFORCE++ aim to balance stability and efficiency. Notably, DeepSeek R1~\citep{deepseekai2025} shows that pure RL can elicit self-reflection and reasoning in LMs without supervised pretraining. 
Recently, several concurrent works have proposed GRPO variants to address its limitations. For instance, Dr.GRPO~\citep{liu2025drgrpounderstandingr1zeroliketrainingcritical} identifies optimization bias in GRPO that favors longer response among incorrect ones. DAPO~\citep{yu2025dapoopensourcellmreinforcement} incorporates multiple improvements, including decoupled clipping thresholds, token-level losses, and an online filtering strategy. GPG~\citep{chu2025gpgsimplestrongreinforcement}, in contrast, adopts a minimalist design by discarding both clipping and KL regularization, relying solely on the policy gradient loss~\citep{sutton1998reinforcement}. 
However, none of these approaches fundamentally resolve the training instability issue to existing RL methods, which is the primary focus of this work.

\textbf{Large reasoning model.}
Recently, a surge of reasoning models has emerged, driven by the principle of test-time scaling laws, which demonstrate that models with explicit reasoning processes achieve superior performance~\citep{chen2025expandingperformanceboundariesopensource}. Leading models in this area include DeepSeek R1~\citep{deepseekai2025}, OpenAI's o-series~\citep{openai2024o1}, Qwen series~\citep{qwq32b, qvq-72b-preview}, and Kimi k1.5~\citep{kimiteam2025kimik15scalingreinforcement}. However, their training pipelines and datasets remain undisclosed.
This has motivated a wave of academic research within the open-source community, including parallel efforts such as OpenR1~\citep{openr1}, TinyZero~\citep{tinyzero}, LMM-R1~\citep{peng2025lmmr1}, R1-V~\citep{chen2025r1v}, Reason-RFT~\citep{tan2025reasonrftreinforcementfinetuningvisual}, and MM-Eureka~\citep{meng2025mmeurekaexploringfrontiersmultimodal}. These works primarily focus on constructing high-quality datasets and complete training pipelines. 
They commonly adopt GRPO to enhance reasoning capabilities but do not specifically investigate improvements to the RL algorithms themselves. 


\section{Preliminaries}

\subsection{Problem formulation}

We denote an LM by $\pi_{\theta}$, where $\theta \in \mathbb{R}^d$ represents the model parameters. Given a prompt $\mathbf{x} = [x_1, \dots, x_m] \in \mathcal{D}$, the model generates a response $\mathbf{y} = [y_1, \dots, y_n]$ by sampling from the conditional distribution $\pi_\theta(\cdot|\mathbf{x})$, with both $x_i$ and $y_i$ drawn from a finite vocabulary $\mathcal{V}$. In this work, we focus on transformer-based LMs that generate responses autoregressively, such that \(\pi_\theta(\mathbf{y}|\mathbf{x}) = \prod_{i=1}^{n}\pi_\theta(y_i|\mathbf{x}, \mathbf{y}_{<i})\), 
where $\mathbf{y}_{<i} = [y_1,\dots,y_{i-1}]$ and $\mathbf{y}_{<1}$ is an empty sequence. 

RL in post-training is typically modeled as a Markov decision process (MDP), defined by a tuple $\mathcal{M} = (\mathcal{S}, \mathcal{A}, \mathcal{P}, \mathcal{R}, \rho)$, where $\mathcal{S}$ is the state space, $\mathcal{A}$ is the action space, $\mathcal{P}$ is the transition kernel, $\mathcal{R}$ is the deterministic reward function, and $\rho$ is the initial state distribution. For LMs, two MDP formulations are widely considered: \textit{token-level MDP} and \textit{response-level MDP}. 

In a \textit{token-level MDP}, each token is treated as a single action. At the time step $t$, the state $\mathbf{s}_t = [\mathbf{x}, \mathbf{y}_{<t}]$ includes the prompt and the tokens generated so far. The action $a_t = y_t$ is sampled according to $y_t \sim \pi_\theta(\cdot | \mathbf{x}, \mathbf{y}_{<t})$, where the action space $\mathcal{A}$ is equal to the vocabulary $\mathcal{V}$. The environment transitions deterministically to $\mathbf{s}_{t+1} = [\mathbf{x}, \mathbf{y}_{<t+1}]$. The reward is defined as $\mathcal{R}(\mathbf{s}_t, a_t) = \mathcal{R}([\mathbf{x}, \mathbf{y}_{<t}], y_t)$, and $\rho$ is induced by the prompt distribution in $\mathcal{D}$.

In a \textit{response-level MDP}, the full response is treated as an individual action: $\mathbf{a} = \mathbf{y} \sim \pi_\theta(\cdot | \mathbf{x})$. The state is defined solely by the prompt $\mathbf{s} = \mathbf{x}$, and the episode terminates after one step. Thus, the transition kernel is omitted in the single-turn dialogue setting. The reward is $\mathcal{R}(\mathbf{s}, \mathbf{a}) = \mathcal{R}(\mathbf{x}, \mathbf{y})$, with $\rho$ again determined by $\mathcal{D}$.

\subsection{Rule-based reinforcement learning}

This work focuses on verifiable tasks, where the outcome reward is determined by the final accuracy. Specifically, a response $\mathbf{y}$ receives a reward of 1 if it is the correct answer to the prompt $\mathbf{x}$, and 0 otherwise. We denote this reward function as $\mathcal{R}_o$ to emphasize its nature as an outcome-based reward. 
Within this setting, REINFORCE-style algorithms are favored as they reduce computational cost by forgoing critic networks. Notable methods include REINFORCE++~\citep{hu2025reinforce++}, RLOO~\citep{RLOOKoolHW19a, RLOOAhmadianCGFKPUH24}, and GRPO~\citep{deepseekai2025}. 

\textbf{REINFORCE++: }
REINFORCE++ enhances the standard REINFORCE framework by integrating key optimizations from PPO~\citep{PPO}, improving both stability and efficiency. The objective is defined as:
\begin{align*}
    \mathcal{L}_\text{R++}(\theta; \theta_{old}) = 
    \mathbb{E}_{\mathbf{x}\sim\mathcal{D}, \mathbf{y}\sim\pi_{\theta_{old}}(\cdot|\mathbf{x})}
    \Bigg[
    \frac{1}{|\mathbf{y}|}&\sum_{i=1}^{|\mathbf{y}|} 
    \min\Bigg(  \frac{\pi_{\theta}(y_i|\mathbf{x}, \mathbf{y}_{<i})}{\pi_{\theta_{old}}(y_i|\mathbf{x}, \mathbf{y}_{<i})}  A_i^\text{R++}, 
    \\&\qquad \operatorname{clip}_{1-\epsilon}^{1+\epsilon}\Big(  \frac{\pi_{\theta}(y_i|\mathbf{x}, \mathbf{y}_{<i})}{\pi_{\theta_{old}}(y_i|\mathbf{x}, \mathbf{y}_{<i})}\Big)A_i^\text{R++} \Bigg)
    \Bigg],
\end{align*}
where $
\epsilon\in[0, 1]$, $\operatorname{clip}_{a}^{b}(x):= \max(\min(x, b), a)$, and
\[
    A_i^\text{R++} := \operatorname{GlobalNorm}\Big(G(\mathbf{x}, \mathbf{y}_{\le i})\Big), \quad 
    G(\mathbf{x}, \mathbf{y}_{\le i}) := 
    \mathcal{R}_o(\mathbf{x}, \mathbf{y}) - \beta \sum_{j=i}^{|\mathbf{y}|} \ln\frac{\pi_{\theta_{old}}(y_j|\mathbf{x}, \mathbf{y}_{<j})}{\pi_\text{ref}(y_j|\mathbf{x}, \mathbf{y}_{<j})}. 
\]
Here, $\ln\frac{\pi_{\theta_{old}}(y_j|\mathbf{x}, \mathbf{y}_{<j})}{\pi_\text{ref}(y_j|\mathbf{x}, \mathbf{y}_{<j})}$ is the token-level KL penalty, constraining divergence from the reference policy $\pi_\text{ref}$, typically the initial model. $\operatorname{GlobalNorm}(x) = \frac{x-\operatorname{mean}(\{x^\prime\in\text{ batch}\})}{\operatorname{std}(\{x^\prime\in\text{ batch}\})}$ is the normalization operation across the global batch for all prompts.

\textbf{RLOO: }The primary distinction between RLOO and REINFORCE++ lies in their computation of the advantage value. RLOO first generates a group of $K$ responses $\{\mathbf{y}^{(k)}\}_{k=1}^K$ for each prompt $\mathbf{x}$ and computes the advantage using a \textit{leave-one-out} strategy to reduce the gradient variance: 
\[
    A_{i, k}^\text{RLOO} := \operatorname{GlobalNorm}\Big(\tilde G(\mathbf{x}, \mathbf{y}^{(k)}_{\le i})\Big),
    \quad 
    \tilde G(\mathbf{x}, \mathbf{y}^{(k)}_{\le i}) := G(\mathbf{x}, \mathbf{y}^{(k)}_{\le i})
    -\frac{1}{K-1}\sum_{k^\prime\neq k}G(\mathbf{x}, \mathbf{y}^{(k^\prime)}_{\le i}). 
\]

\textbf{GRPO: }
GRPO introduces a group-based advantage and employs an external KL regularization via the $k_3$ estimator~\citep{schulman2023approximating}, which approximates $D_\text{KL}(p,q) = \sum_{i}(q_i/p_i-1-\ln q_i/p_i)$. The loss is: 
\begin{align*}
    \mathcal{L}_\text{GRPO}(\theta; \theta_{old}) &= 
    \mathbb{E}_{\mathbf{x}\sim\mathcal{D}, \{\mathbf{y}^{(k)}\}^K_{k=1}\sim\pi_{\theta_{old}}(\cdot|\mathbf{x})}
    \Bigg[\frac{1}{K}\sum_{k=1}^K\Bigg(
    \frac{1}{|\mathbf{y}^{(k)}|}\sum_{i=1}^{|\mathbf{y}^{(k)}|} 
    \Bigg(-\beta \cdot \mathcal{M}_{\theta, \text{ref}}^{i}(\mathbf{x}, \mathbf{y}^{(k)})
    \\&+
    \min\Big( \frac{\pi_{\theta}(y_i^{(k)}|\mathbf{x}, \mathbf{y}^{(k)}_{<i})}{\pi_{\theta_{old}}(y^{(k)}_i|\mathbf{x}, \mathbf{y}^{(k)}_{<i})} A_{k}^\text{GRPO}, \operatorname{clip}_{1-\epsilon}^{1+\epsilon}\Big(\frac{\pi_{\theta}(y_i^{(k)}|\mathbf{x}, \mathbf{y}^{(k)}_{<i})}{\pi_{\theta_{old}}(y^{(k)}_i|\mathbf{x}, \mathbf{y}^{(k)}_{<i})}\Big)A_{k}^\text{GRPO} \Big)
    \Bigg)\Bigg],
\end{align*}
where
\begin{align*}
    &A_{k}^\text{GRPO} := \operatorname{GroupNorm}(\mathcal{R}_o(\mathbf{x}, \mathbf{y}^{(k)}))=\frac{\mathcal{R}_o(\mathbf{x}, \mathbf{y}^{(k)}) - \operatorname{mean}(\{\mathcal{R}_o(\mathbf{x}, \mathbf{y}^{(k)})\}_{k=1}^K)}{\operatorname{std}(\{\mathcal{R}_o(\mathbf{x}, \mathbf{y}^{(k)})\}_{k=1}^K)},
    \\
    &\mathcal{M}_{\theta, \text{ref}}^{i}(\mathbf{x}, \mathbf{y}^{(k)}) := 
    \frac{\pi_\text{ref}(y_i^{(k)}|\mathbf{x}, \mathbf{y}^{(k)}_{<i})}{\pi_{\theta}(y_i^{(k)}|\mathbf{x}, \mathbf{y}^{(k)}_{<i})} - 1 - \ln \frac{\pi_\text{ref}(y_i^{(k)}|\mathbf{x}, \mathbf{y}^{(k)}_{<i})}{\pi_{\theta}(y^{(k)}_i|\mathbf{x}, \mathbf{y}^{(k)}_{<i})}. 
\end{align*}

\section{The proposed method}

This section introduces our RL algorithm, \textit{Clipped Policy Gradient Optimization with Policy Drift} (CPGD), designed to improve the stability of RL training. In Section~\ref{sec: CPGD}, we present the CPGD algorithm along with its theoretical guarantees, and highlight potential limitations of the standard PPO-clip loss. In Section~\ref{sec: training collapse}, we provide empirical evidence of instability in existing methods and analyze its possible causes, showing how CPGD addresses them for more stable training. Finally, Section~\ref{sec: implementation} describes the practical implementation of CPGD, striking a balance between theoretical soundness and practical implementation.

\subsection{Clipped Policy Gradient Optimization with Policy Drift (CPGD)}\label{sec: CPGD}

Under the response-level MDP assumption, CPGD aims to maximize the following formula:
\begin{equation}\label{eq: CPGD theory}
    \mathcal L_\text{CPGD}(\theta; \theta_{old})=\mathbb{E}_{\mathbf{x}\sim\mathcal D}\Big[\mathbb{E}_{\mathbf{y}\sim\pi_{\theta_{old}}(\cdot|\mathbf{x})}
    \big[\Phi_{\theta}(\mathbf{x}, \mathbf{y})\big]
    -\alpha \cdot {D}_\text{KL}(\pi_{\theta_{old}}, \pi_{\theta}|\mathbf{x})
    \Big],
\end{equation}
where
\begin{align*}
    &\Phi_{\theta}(\mathbf{x}, \mathbf{y}) := 
    \min\Big( \ln\frac{\pi_\theta(\mathbf{y}|\mathbf{x})}{\pi_{\theta_{old}}(\mathbf{y}|\mathbf{x})} \cdot A^\text{CPGD}(\mathbf{x}, \mathbf{y}), \operatorname{clip}_{\ln(1-\epsilon)}^{\ln(1-\epsilon)}\Big(\ln\frac{\pi_\theta(\mathbf{y}|\mathbf{x})}{\pi_{\theta_{old}}(\mathbf{y}|\mathbf{x})}\Big)A^\text{CPGD}(\mathbf{x}, \mathbf{y}) \Big),
    \\
    &A^\text{CPGD}(\mathbf{x}, \mathbf{y}) := \mathcal{R}_o(\mathbf{x}, \mathbf{y}) - \mathbb{E}_{\mathbf{y}^\prime\sim\pi_\theta(\cdot|\mathbf{x})}\big[ \mathcal{R}_o(\mathbf{x}, \mathbf{y}^\prime) \big],
    \\
    &{D}_\text{KL}(\pi_{\tilde\theta}, \pi_{\theta}|\mathbf{x}) := \mathbb{E}_{\mathbf{y}\sim\pi_{\tilde\theta}(\cdot|\mathbf{x})} \Big[ \ln\frac{\pi_{\tilde\theta}(\mathbf{y}|\mathbf{x})}{\pi_{\theta}(\mathbf{y}|\mathbf{x})} \Big].
\end{align*}
Hereinafter, we term the KL divergence between the old and current policies as \textit{policy drift}, and between the current and reference policies as \textit{reference constraint}.  

CPGD differs from the standard PPO-clip loss in two key aspects: 
(1) A different policy optimization objective is used, where the policy gradient loss is adopted with the clip mechanism. 
(2) A policy drift is introduced, imposing a forward KL divergence penalty between the old and current policies.

\textbf{Why use the policy gradient objective?}
In the original PPO objective, although the importance-sampling ratio corrects for the distribution mismatch between the old and current policies, it simultaneously introduces high variance. As empirically demonstrated in Section~\ref{sec: training collapse}, such variance can destabilize training and even cause training collapse, while using a policy gradient loss without the ratio substantially improves training stability. Proposition~\ref{prop: ratio out of the range} further provides a theoretical explanation for this phenomenon, showing that the use of the policy ratio amplifies policy drift, causing the updated policy to exceed the intended bounds. 

\textbf{Why introduce the policy drift and clip mechanism?}
The introduction of the clip mechanism and policy drift is designed to ensure proximal policy updates, which are critical for theoretical convergence guarantees in Theorem~\ref{thrm: convergence of CPGD}. The clip mechanism enforces local updates by zeroing gradients when the policy ratio exceeds a specified threshold, while policy drift introduces a corrective gradient to constrain the policy ratio within a stable range. Notably, the clip mechanism alleviates the need for a large penalty coefficient on the policy drift term: when the ratio remains within bounds, the small drift coefficient allows the algorithm to focus on optimizing the primary objective $\Phi$; when the ratio exceeds the range, the gradient of the primary objective becomes zero, prompting the algorithm to rely on the policy drift signal to prevent further deviation---particularly those caused by optimizer momentum (e.g., Adam) or neural network generalization effects.

\begin{proposition}\label{prop: ratio out of the range}
    Let $\theta_0$ be a parameter such that the importance-sampling ratio satisfies $|\frac{\pi_{\theta_0}(\mathbf{y}|\mathbf{x})}{\pi_{\theta_{old}}(\mathbf{y}|\mathbf{x})} - 1|= \epsilon$. Consider updating $\theta_0$ using either (i) the PPO-clip objective, resulting in parameter $\theta_1^{\text{PPO}}$, or (ii) the CPGD objective with $\alpha = 0$ (denoted as CPG), yielding parameter $\theta_1^{\text{CPG}}$. Then, there exists a constant $\eta_{\max} > 0$ such that for any learning rate $\eta \in (0, \eta_{\max})$, the following inequality holds: 
    \[
        \Bigg|\frac{\pi_{\theta_1^\text{PPO}}(\mathbf{y}|\mathbf{x})}{\pi_{\theta_{old}}(\mathbf{y}|\mathbf{x})} - 1\Bigg|>
        \Bigg|\frac{\pi_{\theta_1^\text{CPG}}(\mathbf{y}|\mathbf{x})}{\pi_{\theta_{old}}(\mathbf{y}|\mathbf{x})} - 1\Bigg|>
        \Bigg|\frac{\pi_{\theta_0}(\mathbf{y}|\mathbf{x})}{\pi_{\theta_{old}}(\mathbf{y}|\mathbf{x})} - 1\Bigg|= \epsilon.
    \]
    After one update step, both PPO and CPG increase the importance-sampling ratio deviation from the old policy, but PPO does so more aggressively than CPG. 
\end{proposition}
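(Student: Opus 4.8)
The plan is to reduce the statement to a first-order (in the learning rate $\eta$) analysis of a single gradient-ascent step, isolating the one structural difference between the two objectives. Writing $g := \nabla_\theta \ln\pi_{\theta_0}(\mathbf{y}|\mathbf{x})$, $r_0 := \pi_{\theta_0}(\mathbf{y}|\mathbf{x})/\pi_{\theta_{old}}(\mathbf{y}|\mathbf{x})$, and treating the shared advantage $A := A^\text{CPGD}(\mathbf{x},\mathbf{y})$ as a constant in $\theta$ (as both objectives do), the key observation is that the PPO-clip term contributes a gradient proportional to $\nabla_\theta r = r_0\, g$, whereas the CPG term (using $\ln r$) contributes a gradient proportional to $\nabla_\theta \ln r = g$: the same direction, but PPO carries an extra factor $r_0$. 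I would first record the two resulting updates, $\theta_1^\text{PPO} = \theta_0 + \eta A r_0\, g$ and $\theta_1^\text{CPG} = \theta_0 + \eta A\, g$.

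Since the hypothesis places $\theta_0$ exactly on the clip threshold, I would take the representative operative case $r_0 = 1+\epsilon$ with $A>0$---positive advantage having pushed the ratio up to the upper bound---and expand $r(\cdot)$ along each update. Using $\nabla_\theta r(\theta_0) = r_0 g$, this gives $r(\theta_1^\bullet) = r_0 + \eta A r_0 c_\bullet \|g\|^2 + O(\eta^2)$ with $c_\text{PPO} = r_0$ and $c_\text{CPG} = 1$ for $\bullet\in\{\text{PPO},\text{CPG}\}$. As $A>0$ and $r_0>1$, both ratios strictly increase past $1+\epsilon$, so each deviation equals its increment above $\epsilon$; comparing $\eta A r_0^2\|g\|^2 > \eta A r_0\|g\|^2 > 0$ (the first gap being exactly $\eta A r_0\epsilon\|g\|^2$) yields the stated chain. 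To obtain $\eta_{\max}$, I would assume $r(\theta)$ is twice continuously differentiable with Hessian bounded by some $L$ along both segments (and $g\neq 0$, $A\neq 0$ to exclude the degenerate no-move case), so each remainder is at most $\tfrac{1}{2}L\|\theta_1^\bullet-\theta_0\|^2 = O(\eta^2)$; choosing $\eta_{\max}$ so these are dominated by the $\Theta(\eta)$ first-order gaps preserves both strict inequalities on $(0,\eta_{\max})$.

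The main obstacle I expect is not the calculus but pinning down the gradient at the clip corner: at $r_0 = 1+\epsilon$ the two arguments of the $\min$ coincide and the clip is nondifferentiable, so the argument hinges on justifying that the operative (sub)gradient is the unclipped policy-gradient branch---the gradient actually applied at the step when the ratio first reaches the boundary from inside the trust region---rather than the zero gradient of the flat clipped region. A related point worth stating explicitly is that the amplification is specific to $r_0>1$: at the opposite threshold $r_0 = 1-\epsilon$ (reached when $A<0$), the factor $r_0<1$ shrinks rather than enlarges the step, so it is precisely in the regime the clip is meant to control---ratios exceeding one---that PPO overshoots more aggressively than CPG.
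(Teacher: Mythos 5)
Your proposal follows essentially the same route as the paper's proof: both reduce the claim to a first-order analysis of a single gradient-ascent step, use the same two update directions ($A\,r_0\,\nabla_\theta\ln\pi_{\theta_0}(\mathbf{y}|\mathbf{x})$ for the unclipped PPO branch versus $A\,\nabla_\theta\ln\pi_{\theta_0}(\mathbf{y}|\mathbf{x})$ for CPG), and rest on the same observation that the ratio's first-order change carries coefficient $r_0^2$ under PPO versus $r_0$ under CPG. The paper phrases this through directional derivatives of $f(\eta)$ (CPG ratio) and $g(\eta)$ (CPG ratio minus PPO ratio) at $\eta=0$ rather than a Taylor expansion with an explicit Hessian bound, but that difference is cosmetic; your explicit choice of the unclipped subgradient at the corner, and your explicit assumptions $A\neq 0$, $\nabla_\theta\ln\pi_{\theta_0}(\mathbf{y}|\mathbf{x})\neq 0$, are the choices the paper makes silently.

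The substantive difference is case coverage, and here your proposal is the more accurate one. The paper treats two corners, ($r_0=1+\epsilon$, $A>0$) and ($r_0=1-\epsilon$, $A<0$), and claims the ordering $\bigl|\text{PPO deviation}\bigr|>\bigl|\text{CPG deviation}\bigr|$ at both; you prove it only at the upper corner and assert in your closing remark that the amplification is specific to $r_0>1$. Your remark is correct, and it exposes a sign error in the paper's second case: with $g(\eta)$ as above, one gets $g'(0)=A\,(1-r_0)\,r_0\,\Vert\nabla_\theta\ln\pi_{\theta_0}(\mathbf{y}|\mathbf{x})\Vert^2$, and since $A$ and $1-r_0$ flip sign together across the two corners, $g'(0)<0$ at both. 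At the upper corner (ratios increasing) this yields the claimed conclusion, but at the lower corner (ratios decreasing, $r_0<1$) it means CPG, not PPO, pushes the ratio further from $1$ --- precisely your point that the factor $r_0<1$ shrinks the PPO step; no alternative subgradient choice at the corner rescues the claim there, since taking the flat clipped branch for PPO would leave its deviation at exactly $\epsilon$, again reversing the stated strict inequality. So your proof establishes the proposition exactly on the set of configurations where its first inequality is true, while the second inequality (growth of both deviations beyond $\epsilon$) holds at both corners in your argument and the paper's alike. The only thing I would ask you to add is to state this restriction up front as the hypothesis under which you prove the chain, rather than leaving it as a concluding remark.
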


The following theorem further presents that CPGD enjoys the convergence guarantee, indicating its theoretical rationality. See Appendix~\ref{apx: proof} for the proofs of Proposition~\ref{prop: ratio out of the range} and Theorem~\ref{thrm: convergence of CPGD}. 
\begin{theorem}\label{thrm: convergence of CPGD}
    Let $\{\pi_{\theta_k}\}_{k=0}^\infty$ denote the sequence of policies generated by the CPGD update rule (Equation~\ref{eq: CPGD theory}). Then, the sequence ${\pi_{\theta_k}}$ converges. 
\end{theorem}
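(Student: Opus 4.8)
The plan is to treat the CPGD update as the idealized maximization $\theta_{k+1} \in \arg\max_\theta \mathcal{L}_\text{CPGD}(\theta; \theta_k)$, so that the old policy at step $k$ is the current iterate $\pi_{\theta_k}$, and to exhibit the per-prompt expected reward $J_\mathbf{x}(\theta) := \mathbb{E}_{\mathbf{y}\sim\pi_\theta}[\mathcal{R}_o(\mathbf{x},\mathbf{y})]$, together with its average $J(\theta) := \mathbb{E}_\mathbf{x}[J_\mathbf{x}(\theta)] \in [0,1]$, as a monotone Lyapunov function. The first step is a feasibility argument: evaluating the objective at $\theta = \theta_k$ makes the log-ratio, its clipped version, and the policy drift all vanish, so $\mathcal{L}_\text{CPGD}(\theta_k; \theta_k) = 0$; since $\theta_{k+1}$ is a maximizer, $\mathcal{L}_\text{CPGD}(\theta_{k+1}; \theta_k) \ge 0$, which rearranges (using $D_\text{KL} \ge 0$) into the key surrogate inequality $\mathbb{E}_\mathbf{x}\mathbb{E}_{\mathbf{y}\sim\pi_{\theta_k}}[\Phi_{\theta_{k+1}}] \ge \alpha\, \mathbb{E}_\mathbf{x}[D_\text{KL}(\pi_{\theta_k}, \pi_{\theta_{k+1}}|\mathbf{x})] \ge 0$.

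Next I would connect this surrogate to the true change in return. In the single-step response-level MDP the performance-difference identity is exact: writing $r = \pi_{\theta_{k+1}}/\pi_{\theta_k}$ and using $\mathbb{E}_{\mathbf{y}\sim\pi_{\theta_k}}[\mathcal{R}_o - J_\mathbf{x}(\theta_k)] = 0$, one gets $J_\mathbf{x}(\theta_{k+1}) - J_\mathbf{x}(\theta_k) = \mathbb{E}_{\mathbf{y}\sim\pi_{\theta_k}}[(r-1)(\mathcal{R}_o - J_\mathbf{x}(\theta_k))]$. The surrogate instead involves $\ln r$ in place of $r - 1$, is sampled from $\pi_{\theta_k}$, and uses the $\mathbf{y}$-independent baseline $J_\mathbf{x}(\theta_{k+1})$; I would reconcile these via the elementary bound $\ln r \le r - 1$, the identity $\mathbb{E}_{\mathbf{y}\sim\pi_{\theta_k}}[\ln r] = -D_\text{KL}(\pi_{\theta_k},\pi_{\theta_{k+1}})$ that couples the baseline shift to the drift, and the clip, which confines $\ln r$ to $[\ln(1-\epsilon), \ln(1+\epsilon)]$ so that $r$ stays in a bounded window where the residual second-order terms are controlled. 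The target is an inequality $J(\theta_{k+1}) - J(\theta_k) \ge c\, \mathbb{E}_\mathbf{x}[D_\text{KL}(\pi_{\theta_k}, \pi_{\theta_{k+1}}|\mathbf{x})]$ for some constant $c > 0$ depending on $\alpha$, $\epsilon$, and $\sup|\mathcal{R}_o|$.

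Granting such an inequality, convergence follows in two moves. Since $J(\theta_k)$ is non-decreasing and bounded above by $1$, it converges to some $J^\star$, and its increments telescope to a finite sum; combined with the previous inequality this yields $\sum_k \mathbb{E}_\mathbf{x}[D_\text{KL}(\pi_{\theta_k}, \pi_{\theta_{k+1}}|\mathbf{x})] \le c^{-1}(J^\star - J(\theta_0)) < \infty$, so the consecutive divergence vanishes. Because the vocabulary is finite and responses have bounded length, each $\pi_{\theta_k}(\cdot|\mathbf{x})$ lives in a compact probability simplex; via Pinsker's inequality the total-variation step sizes vanish, and I would conclude by a limit-point argument: the set of limit points of a sequence with vanishing consecutive steps in a compact metric space is connected, and (assuming the update map is continuous) every limit point is a fixed point of the update with zero drift, so if the fixed-point set is totally disconnected the sequence has a unique limit and converges.

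I expect the main obstacle to be the second step: converting the clipped, logarithm-based, old-policy-sampled surrogate into a clean lower bound on the true return improvement, since the $\ln r$ term (rather than $r$) and the one-sided clip break the standard TRPO-style surrogate analysis and force the distribution-mismatch and second-order terms to be absorbed entirely through the policy-drift penalty. A secondary subtlety is the final upgrade from ``consecutive KL vanishes'' to ``the sequence converges'': summability of $D_\text{KL}$ gives $D_\text{KL}\to 0$ but not automatically a Cauchy sequence in total variation, so the compactness-plus-connected-limit-set argument (or an explicit isolation assumption on the fixed points) is needed to rule out oscillation among distinct limit points.
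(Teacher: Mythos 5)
Your proposal follows essentially the same route as the paper's proof: the improvement condition at $\theta_{k+1}$ (the paper in fact only assumes $\mathcal{L}_\text{CPGD}(\theta_{k+1};\theta_k)\ge\mathcal{L}_\text{CPGD}(\theta_k;\theta_k)=0$, not a full argmax) yields your surrogate inequality $\mathbb{E}_{\mathbf{x}}\mathbb{E}_{\mathbf{y}\sim\pi_{\theta_k}}[\Phi_{\theta_{k+1}}]\ge\alpha\,\mathbb{E}_{\mathbf{x}}[D_\text{KL}(\pi_{\theta_k},\pi_{\theta_{k+1}}|\mathbf{x})]$, which is converted into monotone improvement of the expected return, followed by boundedness, telescoping, Pinsker, and compactness. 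The difference is in your second step: the ``main obstacle'' you anticipate dissolves, because the paper explicitly omits the baseline, so the advantage is just $\mathcal{R}_o\ge 0$, and then two pointwise inequalities finish the argument with no second-order analysis at all. Namely, $\min(a,b)\le a$ gives $\Phi_{\theta_{k+1}}\le \ln(r)\,\mathcal{R}_o$ (the clip only \emph{lowers} the surrogate, which is the favorable direction; the paper encodes this as $\min(a,b)=a-\mathrm{ReLU}(a-b)$ with $\mathrm{ReLU}\ge 0$), and $\ln r\le r-1$ together with $\mathcal{R}_o\ge 0$ gives $\ln(r)\,\mathcal{R}_o\le(r-1)\,\mathcal{R}_o$, whose expectation under $\pi_{\theta_k}$ is exactly the return difference; chaining yields $J(\theta_{k+1})-J(\theta_k)\ge\alpha\,\mathbb{E}_{\mathbf{x}}[D_\text{KL}]$ with constant exactly $\alpha$, independent of $\epsilon$ and the reward bound. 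Your instinct that the baseline is the delicate point is correct: with a nonzero baseline the advantage can be negative and $\ln r\le r-1$ points the wrong way on those samples, and your proposed repair via $\mathbb{E}_{\mathbf{y}\sim\pi_{\theta_k}}[\ln r]=-D_\text{KL}$ only recovers a degraded constant, so dropping the baseline (as the paper does) is the cleaner resolution. Finally, your ``secondary subtlety'' is not only legitimate but identifies a genuine gap in the paper's own proof: after establishing $\Vert\pi_{\theta_{k+1}}-\pi_{\theta_k}\Vert_1\to 0$, the paper simply asserts convergence from compactness of the parameter space, which is insufficient in general (vanishing consecutive steps allow non-convergent wandering among limit points); your connected-limit-set argument with an isolation or fixed-point assumption is precisely the kind of additional hypothesis needed to make that last step rigorous, so on this point your plan is more careful than the published proof.
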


\subsection{Training collapse}\label{sec: training collapse}


\begin{figure}[t]
    \centering
    \includegraphics[width=0.8\linewidth]{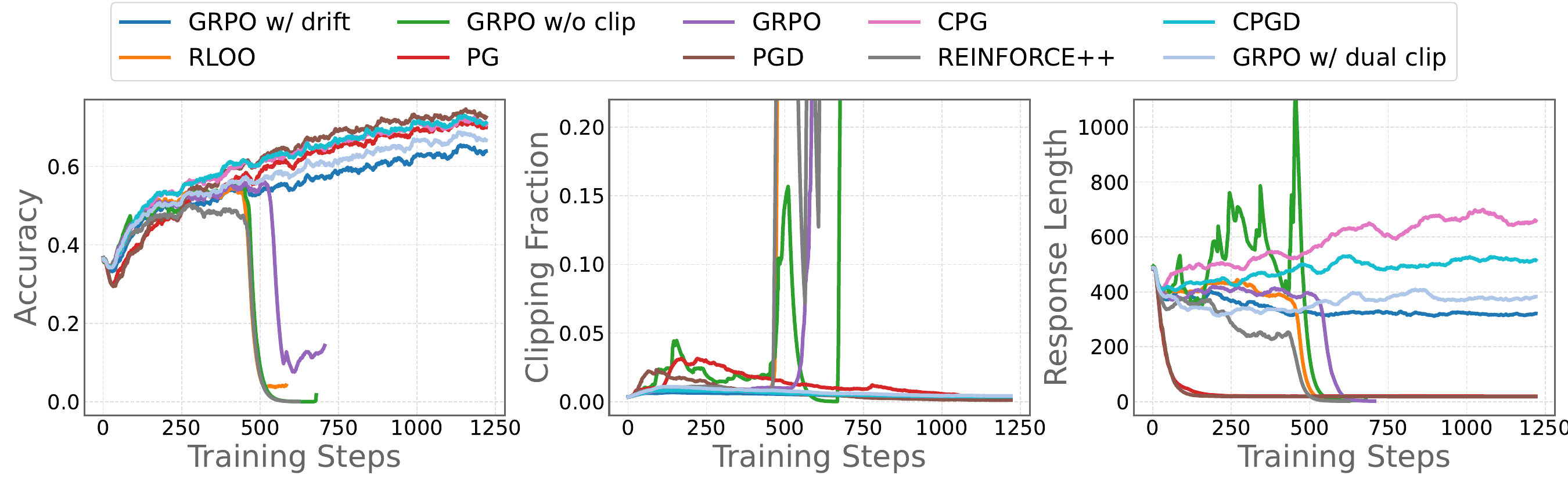}
    \caption{{Accuracy, clipping fraction and response length curves throughout training.} }
    \label{fig: training collapse}
\end{figure}

Several studies suggest that the reference constraint may hinder policy improvement~\citep{yu2025dapoopensourcellmreinforcement, OpenReasonerZero2025}. However, we observe that removing this KL term leaves the PPO-clip loss alone insufficient to effectively constrain large policy shifts, which can lead to training collapse. While such collapse may be partially mitigated through techniques such as early stopping or small learning rates, it remains a latent instability that undermines the reliability of continued training. In this subsection, we investigate this phenomenon of training collapse and demonstrate that CPGD effectively prevents it.

Figure~\ref{fig: training collapse} presents training curves on the MMK12 dataset~\citep{meng2025mmeurekaexploringfrontiersmultimodal} for RLOO, REINFORCE++, GRPO, GRPO w/o clip (i.e., GRPO without the clip mechanism), GRPO w/ dual clip (i.e., the policy ratio is additionally clipped to no more than a constant---$3.0$ in our case---when advantage is negative~\citep{ye2020mastering}), GRPO w/ drift (i.e., GRPO with policy drift), PG (basic policy gradient), CPG (PG with the clip mechanism), PGD (PG with the policy drift), and CPGD, all without the reference constraint. We use QwenVL2.5-7B~\citep{bai2023qwenvlversatilevisionlanguagemodel} as the base model. All algorithms share the same hyperparameters: a training and rollout batch size of $128$, $8$ responses per prompt, a learning rate of $1e{-}6$, one PPO epoch, and ten training episodes. As shown in Figure~\ref{fig: training collapse}, almost all baselines experience training collapse.

As shown in Figure~\ref{fig: training collapse}, methods such as REINFORCE++, RLOO, GRPO w/o clip, and GRPO exhibit highly unstable policy ratio dynamics, leading to training collapse in mid stages. In contrast, GRPO w/ dual clip, GRPO w/ drift, PG, CPG, PGD, and CPGD maintain stable training curves. GRPO w/ dual clip mitigates instability by globally constraining the policy ratio, while the PG series sidesteps ratio-induced variance by excluding it from the loss computation. These comparisons indicate that incorporating policy ratios in the loss can introduce high variance during fluctuations, and that simple one-sided clipping fails to recover from extreme ratios, ultimately causing collapse. Although dual clip mechanism stabilizes training, it may introduce new issues: frequent zero-gradient updates and ineffective learning under negative advantages due to the zero-gradient clipped large ratios. Additionally, GRPO w/ drift demonstrates that incorporating policy drift effectively constrains the policy ratio within a reasonable range, thereby preventing training collapse.

On the other hand, while prior work suggests clipping may be unnecessary due to the low proportion of clipped ratios~\citep{RLOOAhmadianCGFKPUH24, chu2025gpgsimplestrongreinforcement}, our findings suggest otherwise. Despite only \textasciitilde1\% of ratios being clipped, training performance diverges significantly with and without clipping. Specifically, methods like PG and PGD—though stable without ratio terms—suffer from response length collapse, degenerating into trivial outputs (e.g., only emitting tokens like \textless think\textgreater) that exploit the format reward function without performing meaningful reasoning. This highlights the model’s vulnerability to reward hacking, likely due to overly aggressive updates. These results reveal the necessity of the proximal policy updates.

\subsection{Implementation}\label{sec: implementation}

In this subsection, we design a practically implementable loss function in per-token form based on the CPGD update formulation (Equation~\ref{eq: CPGD theory}), aiming to strike a balance between theoretical rigor and empirical applicability. Our CPGD loss is straightforward to be integrated into widely-used large model training frameworks such as OpenRLHF~\citep{hu2024openrlhf} and veRL~\citep{sheng2024hybridflow}. The practical loss function is given by
\begin{equation}\label{eq: CPGD loss}
    \mathcal{J}_\text{CPGD}(\theta) = -\frac{1}{|\mathcal{D}|}\sum_{(\mathbf{x}, \{\mathbf{y}^{(k)}\}_{k=1}^{K})\in\mathcal D}
    \frac{1}{\sum_{k=1}^{K}|\mathbf{y}^{(k)}|}\Bigg[
    \sum_{i=1}^{|\mathbf{y}^{(k)}|}\Bigg(
    \Phi_{\theta}^{i}(\mathbf{x}, \mathbf{y}^{(k)})
    -\alpha \cdot \mathcal{E}^{i}_{\theta_{old}, \theta}(\mathbf{x}, \mathbf{y}^{(k)})
    \Bigg)
    \Bigg],
\end{equation}
where the per-token policy optimization term is
\begin{align*}
    &\Phi_{\theta}^{i}(\mathbf{x}, \mathbf{y}) \!\!:=\! 
    \min\!\!\Bigg(\!\! \ln\!\frac{\pi_\theta(y_i|\mathbf{x}, \mathbf{y}_{<i})}{\pi_{\theta_{old}}(y_i|\mathbf{x}, \mathbf{y}_{<i})}\!\cdot\! A_{\omega}^\text{CPGD}(\mathbf{x}, \mathbf{y}), 
    \operatorname{clip}_{\ln{(1-\epsilon_i)}}^{\ln{(1+\epsilon_i)}}\!\!\Big(\!\ln\frac{\pi_\theta(y_i|\mathbf{x}, \mathbf{y}_{<i})}{\pi_{\theta_{old}}(y_i|\mathbf{x}, \mathbf{y}_{<i})}\!\Big)A_{\omega}^\text{CPGD}(\mathbf{x}, \mathbf{y}) \!\!\Bigg),
\end{align*}
and
\begin{align*}
    &A^\text{CPGD}_{\omega}(\mathbf{x}, \mathbf{y}^{(k)}) := \omega(\mathbf{x})\cdot\Big({\mathcal{R}_o(\mathbf{x}, \mathbf{y}^{(k)}) - \operatorname{mean}\big( \{\mathcal{R}_o(\mathbf{x}, \mathbf{y}^{(k^\prime)})\}_{k^\prime=1}^K \big)}\Big),
    \\
    &\mathcal{E}^{i}_{{\theta_{old}}, \theta}(\mathbf{x}, \mathbf{y}) := 
    \min\Big(\frac{\operatorname{sg}(\pi_{\theta}(y_i|\mathbf{x}, \mathbf{y}_{<i}))}{\pi_{\theta_{old}}(y_i|\mathbf{x}, \mathbf{y}_{<i})} - 1, c \Big) \cdot\ln\pi_{\theta}(y_i|\mathbf{x}, \mathbf{y}_{<i}). 
\end{align*}
Here, $\operatorname{sg}(\cdot)$ denotes the operation that prevents gradient computation, $\omega(\mathbf{x})$ is a per-prompt weighting factor, and $c>0$ is a constant. 
We provide the following clarifications regarding the differences between the theoretical update formulation (Equation~\ref{eq: CPGD theory}) and the practical loss (Equation~\ref{eq: CPGD loss}):

\textbf{(I) Policy optimization term:}
In the theoretical update (Equation~\ref{eq: CPGD theory}), the policy optimization term is written in the form of joint distribution. But in the practical implementation (Equation~\ref{eq: CPGD loss}), it is decomposed into token level using the decomposability of the logarithm function. 
Specifically, the clipping threshold $\epsilon_i$ can be set the same for all tokens, ensuring that each token shares the same clip range. Alternatively, a tight-to-loose schedule can be employed such as $\epsilon_i =  \lambda \epsilon + (1-\lambda) \epsilon \cdot i / |\mathbf{y}^{(k)}|$, which assigns smaller thresholds to earlier tokens that usually have higher variance.

\textbf{(II) Policy drift:}
Similar to the policy optimization term, policy drift also leverages the decomposability of the logarithm function, but applies the following further transformations:
\begin{align}
    {D}_\text{KL}(\pi_{\theta_{old}}, \pi_{\theta}|\mathbf{x}) &= \mathbb{E}_{\mathbf{y}\sim\pi_{\theta_{old}}(\cdot|\mathbf{x})} \Big[ \ln\frac{\pi_{\theta_{old}}(\mathbf{y}|\mathbf{x})}{\pi_{\theta}(\mathbf{y}|\mathbf{x})} \Big]
    =
    \mathbb{E}_{\mathbf{y}\sim\pi_{\theta_{old}}(\cdot|\mathbf{x})} \Big[ \sum_{i=1}^{|\mathbf{y}|}\ln\frac{\pi_{\theta_{old}}(y_i|\mathbf{x}, \mathbf{y}_{<i})}{\pi_{\theta}(y_i|\mathbf{x}, \mathbf{y}_{<i})} \Big]
    \label{eq: k1}\\&=
    \mathbb{E}_{\mathbf{y}\sim\pi_{\theta_{old}}(\cdot|\mathbf{x})} \Big[ \sum_{i=1}^{|\mathbf{y}|}\Big(
    \frac{\pi_{\theta}(y_i|\mathbf{x}, \mathbf{y}_{<i})}{\pi_{\theta_{old}}(y_i|\mathbf{x}, \mathbf{y}_{<i})} - 1 - 
    \ln\frac{\pi_{\theta}(y_i|\mathbf{x}, \mathbf{y}_{<i})}{\pi_{\theta_{old}}(y_i|\mathbf{x}, \mathbf{y}_{<i})} \Big) \Big].
    \label{eq: k3}
\end{align}
Equations~\ref{eq: k1} and \ref{eq: k3} correspond to the $k_1$ and $k_3$ KL estimators proposed by Schulman~\citep{schulman2023approximating}. In practice, particularly when using gradient optimizers such as Adam, we prefer the $k_3$ estimator over $k_1$, as $k_1$ fails to effectively constrain the policy drift, while the gradient direction of $k_3$ dynamically adjusts based on the relative magnitude between the current and old policies:
\begin{align}
    &\nabla_{\theta}\ln\frac{\pi_{\theta_{old}}(y_i|\mathbf{x}, \mathbf{y}_{<i})}{\pi_{\theta}(y_i|\mathbf{x}, \mathbf{y}_{<i})} = -\nabla_{\theta}\ln{\pi_{\theta}(y_i|\mathbf{x}, \mathbf{y}_{<i})},
    \nonumber\\
    &\nabla_{\theta}\Big(\frac{\pi_{\theta}(y_i|\mathbf{x}, \mathbf{y}_{<i})}{\pi_{\theta_{old}}(y_i|\mathbf{x}, \mathbf{y}_{<i})} - 1 - 
    \ln\frac{\pi_{\theta}(y_i|\mathbf{x}, \mathbf{y}_{<i})}{\pi_{\theta_{old}}(y_i|\mathbf{x}, \mathbf{y}_{<i})}\Big)
    =\Big(\frac{\pi_{\theta}(y_i|\mathbf{x}, \mathbf{y}_{<i})}{\pi_{\theta_{old}}(y_i|\mathbf{x}, \mathbf{y}_{<i})} - 1\Big)
    \nabla_{\theta}\ln{\pi_{\theta}(y_i|\mathbf{x}, \mathbf{y}_{<i})}.\label{eq: k3 gradient}
\end{align}

However, Equation~\ref{eq: k3} involves the policy ratio, which can potentially lead to training collapse as discussed in Section~\ref{sec: training collapse}. To mitigate this issue, we clip the policy ratio to be no greater than $c+1$. Importantly, this clipping is not applied directly to the KL divergence estimator in Equation~\ref{eq: k3}, but rather to its gradient (Equation~\ref{eq: k3 gradient}). 
This design ensures that when the ratio exceeds the threshold, the policy drift term continues to provide a gradient that reduces the ratio: when $\frac{\operatorname{sg}(\pi_{\theta}(y_i|\mathbf{x}, \mathbf{y}_{<i}))}{\pi_{\theta_{old}}(y_i|\mathbf{x}, \mathbf{y}_{<i})}-1>c$, 
\[
    \nabla_{\theta} \mathcal{E}^{i}_{{\theta_{old}}, \theta}(\mathbf{x}, \mathbf{y}) = 
    c\cdot \nabla_{\theta}\ln\pi_{\theta}(y_i|\mathbf{x}, \mathbf{y}_{<i}). 
\]
In contrast, if clipping were applied to the estimator itself, the resulting gradient $-\nabla_{\theta}\ln\pi_{\theta}(y_i|\mathbf{x}, \mathbf{y}_{<i})$ would further increase the ratio once it exceeds the threshold, exacerbating training instability.

\textbf{(III) Weighted advantage:}
In the view of the response level, each prompt can be viewed as a distinct task. Consequently, we can introduce a per-prompt weighting factor $\omega(\mathbf{x})$ to assign different levels of importance to different prompts. 
(1) \textit{Equal weight}: when $\omega(\mathbf{x}) = 1$, $A^{\text{CPGD}}_{\omega}$ reduces to the original unweighted form.
(2) \textit{STD weight}: when $\omega(\mathbf{x})=1/\operatorname{std}(\{\mathcal{R}(\mathbf{x}, \mathbf{y}^{(k)})\}_{k})$, $A^{\text{CPGD}}_{\omega}$ is the same as $A^{\text{GRPO}}$. 
(3) \textit{Clip-filter-like weight}: when $\omega(\mathbf{x})=\min(c_{\omega}, \frac{\#\{\mathbf{x}\in\mathcal{D}\}}{\#\{\mathbf{x}\in\mathcal{D}\mid\operatorname{std}(\{\mathcal{R}_o(\mathbf{x}, \mathbf{y}^{(k)})\}_{k})\neq 0\}})$, $c_{\omega}>0$, similar weighting strategies have also been explored in concurrent work~\citep{chu2025gpgsimplestrongreinforcement}, with an analogous effect to online filtering~\citep{cui2025processreinforcementimplicitrewards}, amplifying the gradient contribution of samples with non-zero advantage. 

\section{Experiments}\label{sec: experiments}

\subsection{Experiments setup}

\textbf{RL baselines, dataset and implementation details. }
We compare our CPGD with several widely used RL algorithms, including GRPO~\citep{deepseekai2025}, REINFORCE++~\citep{hu2025reinforce++} and RLOO~\citep{RLOOAhmadianCGFKPUH24} on the MMK12 training dataset~\citep{meng2025mmeurekaexploringfrontiersmultimodal}, which contains 15,616 multimodal math problems with verified answers. 
All RL algorithms use QwenVL2.5-7B as the base model, trained under the same hyperparameters: rollout and training batch sizes of 128, 8 sampled responses per prompt (temperature 1.0), a learning rate of $1{e}{-}6$, one PPO epoch, and five training episodes. No reference policy constraint is applied during training, and final performance is reported using the last checkpoint. 
In our system prompt, reasoning steps and final answers are explicitly marked using \texttt{<think>} and \texttt{<answer>} tags, respectively (see Appendix~\ref{apx: about experiment}).


\textbf{Benchmarks, model baselines and Overall metric. }
We evaluate all algorithms on six widely used benchmarks: MathVista (testmini)~\citep{lu2024mathvistaevaluatingmathematicalreasoning}, MathVerse (testmini)~\citep{zhang2024mathversedoesmultimodalllm}, MathVision (test)~\citep{wang2024measuringmultimodalmathematicalreasoning}, OlympiadBench (EN-OE split)~\citep{he2024olympiadbenchchallengingbenchmarkpromoting}, WeMath~\citep{qiao2024we} and MMK12~\citep{meng2025mmeurekaexploringfrontiersmultimodal}. 
MathVista covers visual QA, logic, algebra, and geometry; MathVerse focuses on mathematically grounded visual understanding; and MathVision extends to abstract visual reasoning. OlympiadBench targets graduate-level competition problems, while WeMath enables fine-grained diagnostic analysis via hierarchically annotated tasks. MMK12 provides 500 multiple-choice questions per subject across math, physics, chemistry, and biology for cross-domain performance evaluation.

We also include several multimodal models as baselines. We evaluate open-source models of comparable model size, trained with various strategies, including QwenVL2.5-7B~\citep{bai2023qwenvlversatilevisionlanguagemodel}, InternVL2.5-8B~\citep{chen2024expanding}, InternVL2.5-MPO-8B~\citep{wang2024enhancingreasoningabilitymultimodal}, R1-OneVision~\citep{yang2025r1onevision}, OpenVLThinker~\citep{deng2025openvlthinker}, and MM-Eureka~\citep{meng2025mmeurekaexploringfrontiersmultimodal}, which collectively represent the average performance of this model size across the evaluated benchmarks. 
We further evaluate the leading closed-source models such as GPT-4o~\citep{hurst2024gpt} and OpenAI-o1~\citep{openai2024o1} to represent the most outstanding performance that the current state-of-the-art model can achieve on these benchmarks. 


To capture overall model performance across $N$ benchmarks, we define an \textit{Overall} metric by normalizing each score against a strong baseline, QwenVL2.5-7B:
\(\text{Overall}:= \frac{1}{N}\sum_{j=1}^N {X_{j}}/{X_{j}^\text{Qwen}}\), where $X_{j}$ and $X_{j}^\text{Qwen}$ are the model and baseline scores on benchmark $j$.

\subsection{Main results}


\begin{table}[h]
    \centering
    \caption{Performance comparison of various 7B/8B models and leading closed-source models. Top performer is in \textbf{bold} and second-best is \underline{underlined}  (excl. OpenAI-o1/GPT-4o). } 
    \label{tab:benchmark_comparison}
    \setlength{\tabcolsep}{2pt}
    \begin{tabular}{lccccccc}
        \toprule
        \textbf{Model} & {MathVista} & {MathVerse} & {MathVision} & {Olypamid} & {WeMath} & {MMK12} & {Overall} \\
        \midrule
        \multicolumn{8}{l}{\textbf{Leading models}} \\
        GPT-4o & 63.8 & 50.2 & 30.4 & 35.0 & 68.8  & 49.9 &1.16\\
        OpenAI-o1 & 73.9 & 57.0 & 60.3 & 68.0 & 98.7 & 73.9 &1.83  \\
        \midrule
        \multicolumn{8}{l}{\textbf{Similar-size models}} \\
        InternVL2.5-8B & 64.4 & 39.5 & 19.7 & 12.3 & 53.5 & 45.6 &0.81  \\
        QwenVL2.5-7B & 68.2 & 47.9 & 25.4 & 20.2 & 62.1 & 53.6 &1.00 \\
        InternVL2.5-MPO-8B & 68.9 & 35.5 & 21.5 & \, 7.8 & 53.5 & 34.5 &0.75 \\
        R1-Onevision (7B) & 64.1 & 47.1 & 23.5 & 17.3 & 61.8  & 39.8 &0.91 \\
        OpenVLThinker (7B) & 70.2 & 47.9 & 25.3 & 20.1 & 64.3 & 60.6 &1.03 \\
        MM-Eureka (7B) & 73.0 & 50.3 & \underline{26.9} & 20.1 & 66.1 & 64.5 &1.07 \\
        \midrule
        \multicolumn{8}{l}{\textbf{Different RL algorithms on QwenVL2.5-7B}} \\
        RLOO & 68.6 & 48.3 & 23.0 & 19.5 & 65.8 & 61.3 &1.01 \\
        REINFORCE++ & 63.9 & 45.5 & 18.2 & 17.8 & 66.7 & 64.3 &0.96 \\
        GRPO  & 70.3 & \textbf{51.4} & {25.9} & 18.5 & 67.4 & 65.1 &1.06 \\
        \rowcolor{RoyalBlue!5}\emph{\textbf{CPGD}} (clip-filter-like) & \underline{73.4} & \textbf{51.4} & {25.9} & \textbf{21.5} & \textbf{70.2} & \textbf{67.3} &\underline{1.10} \\
        \rowcolor{RoyalBlue!5}\emph{\textbf{CPGD}} (STD weight) & \textbf{74.0} & \underline{50.6} & \textbf{28.3} & \underline{21.4} & \underline{68.3} & \underline{65.3} &\textbf{1.11} \\
        \bottomrule
    \end{tabular}
\end{table}

Table~\ref{tab:benchmark_comparison} presents a comprehensive comparison across multiple multimodal mathematical benchmarks. Closed-source models GPT-4o and OpenAI-o1 demonstrate strong performance across all tasks, with o1 achieving the highest scores overall, notably excelling on MathVision (60.3), Olypamid (68.0) and WeMath (98.7), establishing the current performance upper bound. 
Among similar-size open models, MM-Eureka shows competitive results. MM-Eureka achieves strong results on MathVista (73.0), MathVision (26.9) and a strong result on MMK12 (64.5). However, our proposed CPGD consistently outperforms all similar-size baselines, achieving top or near-leading scores across all benchmarks, reflecting the effectiveness of our proposed RL algorithm. 

We further analyze different RL algorithms under the same setting as ours, including the base model, the training dataset, and the hyperparameters. Among baseline methods, GRPO outperforms RLOO and REINFORCE++ on most benchmarks, particularly on MathVerse (51.4) and MathVision (25.9). However, our proposed CPGD method significantly outperforms all baselines, achieving the best performance. Both variants of CPGD (using either clip-filter-like weights or STD-based weights) yield over a +10\% improvement in overall performance compared to the base model QwenVL2.5-7B. Notably, CPGD (STD weight) achieves a +21.8\% gain on the in-domain benchmark MMK12, and further demonstrates strong generalization with +8.5\% and +11.4\% improvements on the out-of-distribution benchmarks MathVista and MathVision, respectively. These results demonstrate that CPGD serves as a strong and robust alternative for RL in LM training. 


\subsection{Ablation study}\label{sec: ablation study}


\begin{table}[h]
    \centering
    \caption{Results of ablation studies. Top performer is in \textbf{bold} and second-best is \underline{underlined}. }  
    \label{tab:ablation study}
    \setlength{\tabcolsep}{1.6pt}
    \begin{tabular}{lcccccccc}
        \toprule
        \textbf{Model} & {MathVista} & {MathVerse} & {MathVision} & {Olypamid} & {WeMath} & {MMK12} & {Overall} \\
        \midrule
        \rowcolor{RoyalBlue!5}\emph{\textbf{CPGD}} (STD weight)  & \textbf{74.0} & {50.6} & \textbf{28.3} & \underline{21.4} & {68.3} & {65.3} &\textbf{1.11} \\
        \midrule
        \multicolumn{8}{l}{\textbf{Ablation study on the components (using STD weight)}} \\
        PG & 67.8 & 42.0 & 22.5 & 8.0 & 58.6 & 65.9 &0.89 \\
        PGD & 64.2 & 41.1 & 20.8 & 7.5 & 58.3 & \textbf{67.3} &0.86 \\
        CPG & 72.7 & \textbf{52.3} & \underline{27.6} & 20.8 & \textbf{70.7} & 66.2 &\textbf{1.11} \\
        \midrule
        \multicolumn{8}{l}{\textbf{Ablation study on the weighting factor}} \\
        unprocessed rewards & 69.1 & 40.2 & 21.8 & 3.5 & 59.7 & \underline{67.2} &0.85 \\
        equal weight & 73.1 & 51.1& {27.2} & 20.8 & 67.9 & 65.8 &1.09 \\
        clip-filter-like weight & \underline{73.4} & \underline{51.4} & 25.9 & \textbf{21.5} & \underline{70.2} & \textbf{67.3} &\underline{1.10} \\
        \midrule
        \multicolumn{8}{l}{\textbf{Ablation study on the reference constraint (using STD weight)}} \\
        w/ reference constraint  & 71.8 & 50.0 & 21.0 & 21.2 & 69.8 & 65.8 &1.05 \\
        \bottomrule
    \end{tabular}
\end{table}

\textbf{Component ablation.}
We conduct ablation on key components of our method by comparing variants: PG (basic policy gradient), PGD (PG + policy drift), CPG (PG + clip mechanism), and CPGD. Results show that the clip mechanism plays the most critical role, as seen by the performance drop from CPG/CPGD to PG/PGD across nearly all benchmarks. This aligns with our observation in Section~\ref{sec: training collapse} that clipping mitigates the response length collapse issue, which otherwise can impair test-time computation and reasoning capabilities. In contrast, adding policy drift has a relatively smaller effect. This is because CPGD’s objective lacks a potentially unstable importance-sampling ratio and already benefits from proximal updates via clipping, making policy drift mainly serve as a safeguard against excessive ratio deviation. 

\textbf{Weighting factor ablation.}
We further ablate different weighting strategies. We additionally include a baseline that uses raw \textit{unprocessed rewards} as advantages, which results in significant performance degradation. This confirms that subtracting the group mean is crucial for stable and effective learning. This approach prevents over-penalization of all responses in the failure cases, which may otherwise trigger a \textit{squeezing effect}~\citep{ren2025learning}, where the $\operatorname{Softmax}$ output head unintentionally reallocates probability mass to unexpected tokens, resulting in undesirable behaviors. 
Both clip-filter-like weight and STD weight outperform equal weighting, which we attribute to their ability to assign greater emphasis to samples with non-zero advantages. This targeted weighting encourages the model to focus more on informative training signals, thereby contributing to the improved performance. 

\textbf{Reference constraint ablation.}
Removing the reference constraint consistently improves performance, which echoes findings from recent studies~\citep{yu2025dapoopensourcellmreinforcement, liu2025drgrpounderstandingr1zeroliketrainingcritical, OpenReasonerZero2025}, suggesting that such constraints may overly restrict policy improvement, and thus hinder overall optimization.



\section{Discussion}

\subsection{Importance sampling}


Importance sampling is a valuable technique for correcting the sampling distribution when the learned policy and the behavior policy differ significantly, thereby improving sample efficiency. While we omit the importance-sampling ratio to reduce variance, we \textbf{do not} suggest discarding it entirely.
In fact, we use a single PPO epoch during training, a widely recommended default~\citep{hu2025reinforce++, meng2025mmeurekaexploringfrontiersmultimodal}. In our view, importance sampling can be omitted with one epoch but should be reintroduced when using more: 
\[
    A_{\omega}^\text{CPGD}(\mathbf{x}, \mathbf{y}) \gets \operatorname{clip}_{1-\epsilon}^{1+\epsilon}\Big(\frac{\operatorname{sg}(\pi_{\theta^{(m-1)}}(y_i|\mathbf{x}, \mathbf{y}_{<i}))}{\pi_{\theta_{old}}(y_i|\mathbf{x}, \mathbf{y}_{<i}))}\Big) A_{\omega}^\text{CPGD}(\mathbf{x}, \mathbf{y}),\quad m=1,\dots,M, 
\]
where $\pi_{\theta^{(m)}}$ denotes the updated policy after the $m$-th PPO epoch, and $\pi_{\theta^{(0)}} = \pi_{\theta_{old}}$, and thus the final updated policy is $\theta_{new} = \theta^{(M)}$ after total $M$ epochs. Here, the truncated importance sampling weight is applied to correct the off-policy distribution. 
Notably, we use $\theta^{(m)}$ rather than the real-time $\theta$ to avoid instability caused by frequent updates within a single PPO epoch. This also ensures consistency with our proposed method. However, maintaining $\pi_{\theta^{(m)}}$ may incur additional cost, which we leave for future work to optimize.





\subsection{Forward KL divergence vs. reverse KL divergence}



Our policy drift adopts the \textit{forward KL divergence} $D_\text{KL}(\pi_{\theta_{old}}, \pi_{\theta}|\mathbf{x})$ instead of the \textit{reverse KL divergence} $D_\text{KL}(\pi_{\theta}, \pi_{\theta_{old}}|\mathbf{x})$. 
While forward KL has been explored before~\citep{PPO}, it is considered less effective than PPO-clip. In contrast, reverse KL is more commonly used in theory because it is closely related to mirror descent and has strong convergence guarantees~\citep{geist2019theory, shani2020adaptive}. 

Although these two KL forms are different in how they are calculated, they often lead to similar results in practice~\citep{hsu2020revisitingdesignchoicesproximal}. This is because both are used to control policy updates. In fact, the difference between their gradients turns out to be small when the policy ratio is small, which is usually the case during training as shown in Figure~\ref{fig: training collapse}:
\[
    \nabla_{\theta} D_\text{KL}(\pi_{\theta}, \pi_{\theta_{old}}|\mathbf{x}) - \nabla_{\theta} D_\text{KL}(\pi_{\theta_{old}}, \pi_{\theta}|\mathbf{x})
    {\approx}
    \mathbb{E}_{\mathbf{y}\sim\pi_{\theta_{old}}(\cdot|\mathbf{x})} \Big[
    \frac12\Big(\frac{\pi_{\theta}(\mathbf{y}|\mathbf{x})}{\pi_{\theta_{old}}(\mathbf{y}|\mathbf{x})} - 1 \Big)^2
    \nabla_{\theta} \ln \pi_{\theta}(\mathbf{y}|\mathbf{x}) 
    \Big].
\]
This approximation holds because ${x\ln x \approx x - 1  + \frac12(x-1)^2}$ when $x$ is close to 1. 
Despite their similarity, we prefer forward KL for two main reasons: (1) It avoids importance sampling, which reverse KL requires; and (2) It can be cleanly split into per-token terms (see Equation~\ref{eq: k3}), which is not possible with reverse KL due to the importance weights.

\subsection{Exploitation vs. exploration}



Recent work~\citep{yue2025doesreinforcementlearningreally} claims that the performance ceiling of a model is determined by its base model, casting a pessimistic view on the role of RL. While we do not fully agree or disagree, we offer a more nuanced view: \textit{the exploration capability is largely determined by the base model}.

In RL training for LMs, the set of possible responses is constrained by what the base model can generate. RL helps it pick the best ones, boosting metrics like ${\text{Maj}@K}$. In other words, pretraining and SFT shape what the model can explore, while RL enhances the model’s exploitation ability.

This work mainly aims to improve RL stability, but advancing LM reasoning capability requires improving both RL and earlier stages like SFT to expand the model’s exploration range. Encouraging active exploration may be key to unlocking further improvements in model performance.


\section{Conclusion}

We identify a critical source of instability in existing RL methods for LMs: the use of asymmetric clipping on importance-sampling ratios, which can result in training collapse. To address this, we propose \textit{CPGD}, a principled alternative that avoids direct dependence on policy ratios while enforcing proximal updates through the clip mechanism and policy drift. CPGD further incorporates a stable KL estimator and a weighted advantage strategy to improve learning robustness. Theoretically grounded and empirically validated, CPGD demonstrates superior stability and performance across multimodal math benchmarks, offering a strong and stable RL solution for training LMs.





{
\small
\bibliographystyle{unsrt}
\bibliography{neurips_2025}
}


\newpage

\appendix

\section*{Appendix}
\section{Proofs}\label{apx: proof}

\subsection{Proof for Proposition~\ref{prop: ratio out of the range}}

\begin{proposition}\label{apx: prop: ratio out of the range}
    Let $\theta_0$ be a parameter such that the importance-sampling ratio satisfies $|\frac{\pi_{\theta_0}(\mathbf{y}|\mathbf{x})}{\pi_{\theta_{old}}(\mathbf{y}|\mathbf{x})} - 1|= \epsilon$. Consider updating $\theta_0$ using either (i) the PPO-clip objective, resulting in parameter $\theta_1^{\text{PPO}}$, or (ii) the CPGD objective with $\alpha = 0$, yielding parameter $\theta_1^{\text{CPG}}$. Then, there exists a constant $\eta_{\max} > 0$ such that for any learning rate $\eta \in (0, \eta_{\max})$, the following inequality holds: 
    \begin{align*}
        \Bigg|\frac{\pi_{\theta_1^\text{PPO}}(\mathbf{y}|\mathbf{x})}{\pi_{\theta_{old}}(\mathbf{y}|\mathbf{x})} - 1\Bigg|>
        \Bigg|\frac{\pi_{\theta_1^\text{CPG}}(\mathbf{y}|\mathbf{x})}{\pi_{\theta_{old}}(\mathbf{y}|\mathbf{x})} - 1\Bigg|>
        \Bigg|\frac{\pi_{\theta_0}(\mathbf{y}|\mathbf{x})}{\pi_{\theta_{old}}(\mathbf{y}|\mathbf{x})} - 1\Bigg|= \epsilon.
    \end{align*}
    After one update step, both PPO and CPG increase the importance-sampling ratio deviation from the old policy, but PPO does so more aggressively than CPG. 
\end{proposition}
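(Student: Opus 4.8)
The plan is to reduce the statement to a first-order (in the learning rate $\eta$) comparison of two gradient-ascent steps taken from the same point $\theta_0$, exploiting that the only structural difference between the PPO-clip and CPG objectives is that PPO uses the raw ratio $r(\theta):=\pi_\theta(\mathbf{y}|\mathbf{x})/\pi_{\theta_{old}}(\mathbf{y}|\mathbf{x})$ whereas CPG uses its logarithm. Writing $g:=\nabla_\theta\ln\pi_{\theta_0}(\mathbf{y}|\mathbf{x})$ and using $\nabla_\theta r=r\,g$ together with $\nabla_\theta\ln r=g$, I would first fix the representative regime the proposition targets: $r(\theta_0)=1+\epsilon$ with positive advantage $A$ (the large-ratio case in which the update keeps pushing the ratio outward). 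At the boundary $r_0=1+\epsilon$ the two arguments of each $\min$ coincide, so the active (left-)gradient is that of the unclipped branch, namely $\nabla_\theta(rA)=A\,r_0\,g$ for PPO and $\nabla_\theta(\ln r\cdot A)=A\,g$ for CPG. This is the crux of the whole argument: the ratio formulation carries the extra multiplicative factor $r_0=1+\epsilon>1$.

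Next I would write the two updates $\theta_1^{\mathrm{PPO}}=\theta_0+\eta A r_0 g$ and $\theta_1^{\mathrm{CPG}}=\theta_0+\eta A g$ and track the induced change in the log-ratio. Since $\ln r(\theta)=\ln\pi_\theta(\mathbf{y}|\mathbf{x})-\ln\pi_{\theta_{old}}(\mathbf{y}|\mathbf{x})$ and the second term is $\theta$-independent, a second-order Taylor expansion gives $\ln r(\theta_1)-\ln r_0=\langle g,\theta_1-\theta_0\rangle+O(\|\theta_1-\theta_0\|^2)$, i.e. $\eta A r_0\|g\|^2+O(\eta^2)$ for PPO and $\eta A\|g\|^2+O(\eta^2)$ for CPG. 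Under the nondegeneracy $g\neq 0$ and $A>0$, both leading terms are strictly positive and the PPO leading term exceeds the CPG one by $\eta A\epsilon\|g\|^2$. Exponentiating and using monotonicity of $\exp$, $r(\theta_1^{\mathrm{PPO}})>r(\theta_1^{\mathrm{CPG}})>r_0=1+\epsilon>1$; because all three ratios lie above $1$, the deviations satisfy $|r-1|=r-1$ and the chain follows immediately.

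To make ``for small $\eta$'' precise I would invoke local smoothness of $\theta\mapsto\ln\pi_\theta(\mathbf{y}|\mathbf{x})$, assuming a Hessian bounded in norm by some $L$ on a neighborhood of $\theta_0$. This bounds each Taylor remainder by $\tfrac{L}{2}\eta^2\|(\text{coeff})\,g\|^2$, so the difference of the two remainders is $O(\eta^2)$ while each of the two target gaps has an $\Theta(\eta)$ leading term ($\eta A\epsilon\|g\|^2$ for the PPO-versus-CPG comparison and $\eta A\|g\|^2$ for the CPG-versus-$\theta_0$ comparison). Choosing $\eta_{\max}$ small enough that both quadratic remainders stay below half the corresponding linear gap (so $\eta_{\max}$ scales like $\epsilon/(L A r_0^2\|g\|^2)$ up to constants) guarantees both strict inequalities for every $\eta\in(0,\eta_{\max})$.

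The step I expect to be the main obstacle is justifying the gradient used at the \emph{exact} clip boundary: since $\min(rA,\operatorname{clip}(r)A)$ has a kink at $r_0=1+\epsilon$, one must argue that the outward update direction is governed by the unclipped branch, i.e. the clip only disengages strictly beyond the threshold, so a step starting at the boundary still moves outward before any clipping takes effect. I would also state honestly that the amplification is driven by the factor $r_0>1$: the clean comparison therefore holds on the upper boundary $r_0=1+\epsilon$ (the regime of practical concern), whereas on the lower boundary the same first-order machinery applies but the factor $r_0<1$ alters the quantitative comparison, so I would restrict attention to the large-ratio case that the proposition is about.
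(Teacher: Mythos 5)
Your upper-boundary argument is essentially the paper's own proof. The paper also works at first order in $\eta$: it defines $f(\eta)$ and $g(\eta)$ as the ratio (and the difference of ratios) along the two one-step updates, computes directional derivatives at $\eta=0$, and the entire comparison reduces to the fact that $\nabla_\theta \hat{\mathcal{L}}_\text{PPO}(\theta_0) = A\, r_0 \nabla_\theta \ln\pi_{\theta_0}(\mathbf{y}|\mathbf{x})$ carries the extra factor $r_0 = 1+\epsilon > 1$ relative to $\nabla_\theta \hat{\mathcal{L}}_\text{CPG}(\theta_0) = A \nabla_\theta \ln\pi_{\theta_0}(\mathbf{y}|\mathbf{x})$ --- exactly your crux. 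Your Taylor-plus-Hessian-bound treatment is a more quantitative rendering of the paper's qualitative continuity argument (the paper never exhibits an explicit $\eta_{\max}$), and the kink issue you flag as the main obstacle is resolved in the paper simply by fiat: it \emph{defines} the update gradients to be those of the unclipped branch at the boundary, which is the same convention you adopt. Both proofs also share the unstated nondegeneracy hypothesis $\nabla_\theta \ln\pi_{\theta_0}(\mathbf{y}|\mathbf{x}) \neq 0$.

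The substantive difference is case coverage, and here your closing caveat is not a weakness but a genuine catch. The proposition's hypothesis $|r_0-1|=\epsilon$ also admits the lower boundary $r_0 = 1-\epsilon$ with $A<0$, and the paper claims this case follows ``similarly,'' asserting $g(\eta) > g(0)$ for $g(\eta) = r(\theta_1^\text{CPG}) - r(\theta_1^\text{PPO})$. But evaluating the paper's own expression there gives
\begin{align*}
    g^\prime(0) = A\,(1-r_0)\, r_0\, \big\Vert \nabla_\theta \ln\pi_{\theta_0}(\mathbf{y}|\mathbf{x}) \big\Vert^2
    = A\,\epsilon\,(1-\epsilon)\, \big\Vert \nabla_\theta \ln\pi_{\theta_0}(\mathbf{y}|\mathbf{x}) \big\Vert^2 < 0
    \quad \text{when } A<0,
\end{align*}
so in fact $g(\eta) < g(0)$ for small $\eta$: at the lower boundary PPO's gradient is \emph{damped} by $r_0 = 1-\epsilon < 1$ rather than amplified, the ratio falls more slowly under PPO than under CPG, and with both ratios below $1$ the middle inequality of the proposition reverses ($|r^\text{CPG}-1| > |r^\text{PPO}-1|$). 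This is precisely your observation that $r_0 < 1$ ``alters the quantitative comparison''; it does more than that --- it falsifies the claimed strict inequality in that regime, meaning the paper's ``similarly'' hides a sign error and the proposition is only correct in the large-ratio case you restricted to. So your proof establishes less than the literal statement, but the literal statement is not provable in full; your scope is the honest one, and your restriction should be read as a (needed) correction to the statement rather than a gap in your argument.
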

\begin{proof}
    Consider $f(\eta) = \frac{\pi_{\theta_1^\text{CPG}}(\mathbf{y}|\mathbf{x})}{\pi_{\theta_{old}}(\mathbf{y}|\mathbf{x})}$, where $\theta_1^\text{CPG} = \theta_0 + \eta \nabla_\theta \mathcal{\hat L}_\text{CPG}(\mathbf{x}, \mathbf{y}; \theta_0)$ is the single gradient ascent step on the empirical CPGD objective (Equation~\ref{eq: CPGD theory}) without the policy drift term. The gradient of the objective takes the form:
    \begin{align*}
        \nabla_\theta \mathcal{\hat L}_\text{CPG}(\mathbf{x}, \mathbf{y}; \theta) = A^\text{CPGD}(\mathbf{x}, \mathbf{y})\nabla_\theta \ln \pi_{\theta}(\mathbf{y} | \mathbf{x} ).
    \end{align*}
    Thus, for the case where $\frac{ \pi_{\theta_0}(\mathbf{y}|\mathbf{x})}{\pi_{\theta_{old}}(\mathbf{y}|\mathbf{x})} = 1+\epsilon$ and $A^\text{CPGD}(\mathbf{x}, \mathbf{y})>0$, the directional derivative of $f$ at $\eta=0$ satisfies:
    \begin{align*}
        f^\prime(0) = \langle{
        \frac{\nabla_\theta \pi_{\theta_0}(\mathbf{y}|\mathbf{x})}{\pi_{\theta_{old}}(\mathbf{y}|\mathbf{x})}, \nabla_\theta \mathcal{\hat L}_\text{CPG}(\mathbf{x}; \theta_0)
        }\rangle > 0. 
    \end{align*}
    Hence, there exists a constant $\eta_1>0$ such that for any $\eta\in(0, \eta_{1})$, we have $f(\eta) > f(0)$. 
    Similarly, when $\frac{ \pi_{\theta_0}(\mathbf{y}|\mathbf{x})}{\pi_{\theta_{old}}(\mathbf{y}|\mathbf{x})} = 1-\epsilon$ and $A^\text{CPGD}(\mathbf{x}, \mathbf{y})<0$, there exists $\eta_2>0$ such that $f(\eta) < f(0)$ for any $\eta\in(0, \eta_{2})$.
    
    Therefore, for any $0< \eta < \min(\eta_1, \eta_2)$, the following holds:
    \begin{align}\label{apx: prop1: result 1}
        |\frac{\pi_{\theta_1^\text{CPG}}(\mathbf{y}|\mathbf{x})}{\pi_{\theta_{old}}(\mathbf{y}|\mathbf{x})} - 1|>
        |\frac{\pi_{\theta_0}(\mathbf{y}|\mathbf{x})}{\pi_{\theta_{old}}(\mathbf{y}|\mathbf{x})} - 1|= \epsilon.
    \end{align}

    Next, define $g(\eta) = \frac{\pi_{\theta_1^\text{CPG}}(\mathbf{y}|\mathbf{x})}{\pi_{\theta_{old}}(\mathbf{y}|\mathbf{x})} - \frac{\pi_{\theta_1^\text{PPO}}(\mathbf{y}|\mathbf{x})}{\pi_{\theta_{old}}(\mathbf{y}|\mathbf{x})}$, where $\theta_1^\text{PPO} = \theta_0 + \eta \nabla_\theta \mathcal{\hat L}_\text{PPO}(\mathbf{x}, \mathbf{y}; \theta_0)$ and
    \begin{align*}
        \nabla_\theta \mathcal{\hat L}_\text{PPO}(\mathbf{x}, \mathbf{y}; \theta) = A^\text{CPGD}(\mathbf{x}, \mathbf{y}) \frac{\nabla_\theta\pi_{\theta}(\mathbf{y}|\mathbf{x})}{\pi_{\theta_{old}}(\mathbf{y}|\mathbf{x})}.
    \end{align*}
    For the case where $\frac{ \pi_{\theta_0}(\mathbf{y}|\mathbf{x})}{\pi_{\theta_{old}}(\mathbf{y}|\mathbf{x})} = 1+\epsilon$ and $A^\text{CPGD}(\mathbf{x}, \mathbf{y})>0$, we have: 
    \begin{align*}
        g^\prime(0) = 
        \Big\langle{
        \frac{\nabla_\theta \pi_{\theta_0}(\mathbf{y}|\mathbf{x})}{\pi_{\theta_{old}}(\mathbf{y}|\mathbf{x})}, A^\text{CPGD}(\mathbf{x}, \mathbf{y}) \cdot (1-\frac{\pi_{\theta}(\mathbf{y}|\mathbf{x})}{\pi_{\theta_{old}}(\mathbf{y}|\mathbf{x})}) \cdot \nabla_\theta \ln\pi_{\theta}(\mathbf{y}|\mathbf{x})
        }\Big\rangle < 0.
    \end{align*}
    Hence, there exists a constant $\eta_3>0$ such that $g(\eta) < g(0)$ for any $\eta\in(0, \eta_{3})$. 
    Similarly, for the case where $\frac{ \pi_{\theta_0}(\mathbf{y}|\mathbf{x})}{\pi_{\theta_{old}}(\mathbf{y}|\mathbf{x})} = 1-\epsilon$ and $A^\text{CPGD}(\mathbf{x}, \mathbf{y})<0$, there exists a constant $\eta_4>0$ such that $g(\eta) > g(0)$ for any $\eta\in(0, \eta_{4})$. 
    
    Therefore, for any $0 < \eta < \min(\eta_3, \eta_4)$, we have
    \begin{align}\label{apx: prop1: result 2}
        |\frac{\pi_{\theta_1^\text{PPO}}(\mathbf{y}|\mathbf{x})}{\pi_{\theta_{old}}(\mathbf{y}|\mathbf{x})} - 1|>
        |\frac{\pi_{\theta_1^\text{CPG}}(\mathbf{y}|\mathbf{x})}{\pi_{\theta_{old}}(\mathbf{y}|\mathbf{x})} - 1|. 
    \end{align}

    Therefore, by letting $\eta_{\max} = \min(\eta_1, \eta_2, \eta_3, \eta_4)$, the proof is complete. 
\end{proof}

\subsection{Proof for Theorem~\ref{thrm: convergence of CPGD}}

\begin{theorem}\label{apx: thrm: convergence of CPGD}
    Let $\{\pi_{\theta_k}\}_{k=0}^\infty$ denote the sequence of policies generated by the CPGD update rule (Equation~\ref{eq: CPGD theory}). Then, the sequence ${\pi_{\theta_k}}$ converges. 
\end{theorem}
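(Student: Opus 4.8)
The plan is to read the CPGD update as the proximal (minorize--maximize) scheme $\theta_{k+1}=\arg\max_\theta \mathcal L_\text{CPGD}(\theta;\theta_k)$ and to prove convergence through a bounded, monotone Lyapunov functional, namely the expected return $J(\theta):=\mathbb{E}_{\mathbf{x}\sim\mathcal D}\mathbb{E}_{\mathbf{y}\sim\pi_\theta(\cdot|\mathbf{x})}[\mathcal R_o(\mathbf{x},\mathbf{y})]$, which satisfies $J(\theta)\in[0,1]$. The starting observation is that the anchor $\theta=\theta_k$ is always feasible and makes the objective vanish: $\ln\frac{\pi_{\theta_k}(\mathbf{y}|\mathbf{x})}{\pi_{\theta_k}(\mathbf{y}|\mathbf{x})}=0$ lies inside the clip interval $[\ln(1-\epsilon),\ln(1+\epsilon)]$, so $\Phi_{\theta_k}\equiv 0$, and $D_\text{KL}(\pi_{\theta_k},\pi_{\theta_k}|\mathbf{x})=0$; hence $\mathcal L_\text{CPGD}(\theta_k;\theta_k)=0$. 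Since $\theta_{k+1}$ maximizes $\mathcal L_\text{CPGD}(\cdot;\theta_k)$, we obtain the single workhorse inequality $\mathcal L_\text{CPGD}(\theta_{k+1};\theta_k)\ge 0$, under the standing assumption that such a maximizer exists.

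Next I would split this inequality to isolate the drift penalty against the clipped surrogate, giving
\[
\alpha\,\mathbb{E}_{\mathbf{x}}[D_\text{KL}(\pi_{\theta_k},\pi_{\theta_{k+1}}|\mathbf{x})]\le \mathbb{E}_{\mathbf{x}}\mathbb{E}_{\mathbf{y}\sim\pi_{\theta_k}}[\Phi_{\theta_{k+1}}(\mathbf{x},\mathbf{y})].
\]
I would then connect the right-hand side to the one-step return gain. Writing $A_{\theta_k}$ for the advantage with baseline under $\pi_{\theta_k}$, the performance-difference identity gives $J(\theta_{k+1})-J(\theta_k)=\mathbb{E}_{\mathbf{x}}\mathbb{E}_{\mathbf{y}\sim\pi_{\theta_k}}[(\frac{\pi_{\theta_{k+1}}}{\pi_{\theta_k}}-1)A_{\theta_k}]$. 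Using $\ln t\le t-1$ together with the clip, the goal is to show the surrogate both lower-bounds a positive multiple of this gain (so that $J$ is non-decreasing) and is itself controlled by it, yielding $\mathbb{E}[\Phi_{\theta_{k+1}}]\le C\,(J(\theta_{k+1})-J(\theta_k))$ for a constant $C$ depending only on $\epsilon$ and the reward range.

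With monotonicity and boundedness secured, the monotone convergence theorem gives $J(\theta_k)\to J^\star$. Telescoping the displayed bound across $k$ then yields $\alpha\sum_k \mathbb{E}_{\mathbf{x}}[D_\text{KL}(\pi_{\theta_k},\pi_{\theta_{k+1}}|\mathbf{x})]\le C\,(J^\star-J(\theta_0))<\infty$, so the per-step drift is summable and in particular $D_\text{KL}(\pi_{\theta_k},\pi_{\theta_{k+1}}|\mathbf{x})\to 0$. To promote this into convergence of the policy sequence itself, I would work in the space of conditional distributions equipped with the total-variation metric (which is complete on each simplex) and argue that the iterates $\{\pi_{\theta_k}(\cdot|\mathbf{x})\}$ form a Cauchy sequence, whose limit $\pi^\star$ is the claimed convergent policy.

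The hard part will be two-fold. First, the surrogate-to-return link is not an identity: the clip truncates $\Phi_{\theta_{k+1}}$, and the advantage baseline in $A^\text{CPGD}$ is taken under $\pi_\theta$ rather than $\pi_{\theta_k}$, so the clean performance-difference equality must be replaced by two-sided inequalities carrying explicit remainder terms, and one must verify these remainders destroy neither the monotonicity of $J$ nor the upper bound on $\Phi_{\theta_{k+1}}$. Second, summability of the KL does not by itself force Cauchyness in total variation, since Pinsker only gives $D_\text{TV}\le\sqrt{D_\text{KL}/2}$ and $\sum\sqrt{a_k}$ may diverge even when $\sum a_k<\infty$; closing this gap requires either a strengthened per-step estimate (e.g.\ summable $\sqrt{D_\text{KL}}$, which follows if the return gains decay geometrically) or a compactness and identifiability argument pinning the limit set of $\{\pi_{\theta_k}\}$ to a single fixed point at which the policy drift vanishes. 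I expect this final value-to-policy upgrade to be the decisive technical obstacle.
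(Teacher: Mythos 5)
Your plan is essentially the paper's own proof: both arguments hinge on showing that the one-step return gain $\eta(\pi_{\theta_{k+1}})-\eta(\pi_{\theta_k})$ dominates $\alpha\,\mathbb{E}_{\mathbf{x}}\big[D_\text{KL}(\pi_{\theta_k},\pi_{\theta_{k+1}}|\mathbf{x})\big]$, so that the bounded return is monotone, the per-step KL drift vanishes, and (via Pinsker) $\Vert\pi_{\theta_{k+1}}-\pi_{\theta_k}\Vert_1\to 0$. Two points of comparison are worth recording. First, your ``hard part one'' is resolved in the paper more cheaply than you anticipate: only a one-sided bound is needed, and it is obtained by writing the clipped term as the unclipped log-ratio term minus a nonnegative ReLU penalty (using $\min(a,b)=a-\operatorname{ReLU}(a-b)$), by simply omitting the baseline so that the ``advantage'' is the nonnegative reward $\mathcal{R}_o\in\{0,1\}$ (which sidesteps the issue you raise about the baseline being taken under $\pi_\theta$), and by applying $\ln x\le x-1$ pointwise; moreover the paper never assumes an exact maximizer, only that $\theta_{k+1}$ improves on the anchor, i.e.\ $\mathcal{L}_\text{CPGD}(\theta_{k+1};\theta_k)\ge \mathcal{L}_\text{CPGD}(\theta_k;\theta_k)=0$, which is all your argument needs as well. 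Second, your ``hard part two'' is a genuine obstacle---and the paper does not close it either: after establishing that the successive total-variation increments vanish (with squared increments summable by telescoping), it asserts convergence of $\{\pi_{\theta_k}\}$ by invoking compactness of the parameter space. Exactly as you observe, this does not follow: vanishing successive differences plus compactness yield a nonempty connected limit set, not a single limit point, and summable squared increments do not make the sequence Cauchy. So your proposal matches the paper's route, your first worry dissolves on inspection, and your second worry correctly identifies a weakness that is present in the published argument itself.
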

\begin{proof}
First, denote $\mathcal{L}_\text{CPGD}(\theta;\theta_{k}) = \mathbb{E}_{\mathbf{x}\sim\mathcal{D}}\big[  g(\theta;\theta_{k}, \mathbf{x}) \big]$, and rewrite $g$ as 
\begin{align*}
    g(\theta;\theta_{k}, \mathbf{x}) &= 
    \mathbb{E}_{\mathbf{y}\sim\pi_{\theta_k}(\cdot|\mathbf{x})}\Big[ \mathcal{R}_o(\mathbf{x}, \mathbf{y}) \ln\frac{\pi_\theta(\mathbf{y}|\mathbf{x})}{\pi_{\theta_{k}}(\mathbf{y}|\mathbf{x})} \Big]
    -\alpha D_\text{KL}(\pi_{\theta_k}, \pi_{\theta}|\mathbf{x}) 
    \\
    -&\mathbb{E}_{\mathbf{y}\sim\pi_{\theta_{k}}(\cdot|\mathbf{x})}\Big[
    \text{ReLU}\Big(\big[
    \ln\frac{\pi_\theta(\mathbf{y}|\mathbf{x})}{\pi_{\theta_{k}}(\mathbf{y}|\mathbf{x})}   - 
    \operatorname{clip}\big( \ln\frac{\pi_\theta(\mathbf{y}|\mathbf{x})}{\pi_{\theta_{k}}(\mathbf{y}|\mathbf{x})}  , \ln(1-\epsilon), \ln(1+\epsilon) \big)\big]\mathcal{R}_o(\mathbf{x}, \mathbf{y})
    \Big)
    \Big].
\end{align*}
Here, we omit the baseline $\mathbb{E}_{\mathbf{y}\sim\pi_{\theta_{k}}(\cdot|\mathbf{x})}[\mathcal{R}_o(\mathbf{x}, \mathbf{y})]$. 
Then, denoting $\theta_{k+1}$ the point such that $\mathcal{L}_\text{CPGD}(\theta_{k+1};\theta_{k})\ge \mathcal{L}_\text{CPGD}(\theta_k;\theta_{k})$, we obtain
\begin{align*}
    &\mathbb{E}_{\mathbf{y}\sim\pi_{\theta_{k+1}}(\cdot|\mathbf{x})}\Big[ \mathcal{R}_o(\mathbf{x}, \mathbf{y}) \Big] - \mathbb{E}_{\mathbf{y}\sim\pi_{\theta_{k}}(\cdot|\mathbf{x})}\Big[ \mathcal{R}_o(\mathbf{x}, \mathbf{y}) \Big]
    \\=&  
    \mathbb{E}_{\mathbf{y}\sim\pi_{\theta_{k}}(\cdot|\mathbf{x})}\Big[ \big(\frac{\pi_{\theta_{k+1}}(\mathbf{y}|\mathbf{x})}{\pi_{\theta_{k}}(\mathbf{y}|\mathbf{x})} - 1\big)\mathcal{R}_o(\mathbf{x}, \mathbf{y}) \Big]
    \\\ge&
    \mathbb{E}_{\mathbf{y}\sim\pi_{\theta_{k}}(\cdot|\mathbf{x})}\Big[ \ln \frac{\pi_{\theta_{k+1}}(\mathbf{y}|\mathbf{x})}{\pi_{\theta_{k}}(\mathbf{y}|\mathbf{x})} \cdot \mathcal{R}_o(\mathbf{x}, \mathbf{y}) \Big]
    \\=&
    g(\theta_{k+1};\theta_{k}, \mathbf{x}) - g(\theta_{k};\theta_{k}, \mathbf{x})
    +\alpha D_\text{KL}(\pi_{\theta_k}, \pi_{\theta_{k+1}}|\mathbf{x})
    \\&+\mathbb{E}_{\mathbf{y}\sim\pi_{\theta_{k}}(\cdot|\mathbf{x})}\Big[\text{ReLU}\Big(\big[
    \ln\frac{\pi_{\theta_{k+1}}(\mathbf{y}|\mathbf{x})}{\pi_{\theta_{k}}(\mathbf{y}|\mathbf{x})}   - 
    \operatorname{clip}\big( \ln\frac{\pi_{\theta_{k+1}}(\mathbf{y}|\mathbf{x})}{\pi_{\theta_{k}}(\mathbf{y}|\mathbf{x})}  , \ln(1-\epsilon), \ln(1+\epsilon) \big)\big]\mathcal{R}_o(\mathbf{x}, \mathbf{y})
    \Big)
    \Big].
\end{align*}
Denoting the overall expected return by $\eta(\pi_{\theta}) = \mathbb{E}_{\mathbf{x}\sim\mathcal{D}, \mathbf{y}\sim\pi_{\theta}(\cdot|\mathbf{x})}\Big[ \mathcal{R}_o(\mathbf{x}, \mathbf{y}) \Big]$, we integrate over $\mathbf{x}$ to conclude
\begin{align*}
    \eta(\pi_{\theta_{k+1}}) - \eta(\pi_{\theta_{k}}) \ge \alpha \mathbb{E}_{\mathbf{x}\sim\mathcal{D}}\Big[D_\text{KL}(\pi_{\theta_k}, \pi_{\theta_{k+1}}|\mathbf{x})\Big]\overset{\text{Pinsker inequality}}{\ge} \frac{\alpha}{2}\Vert{\pi_{\theta_{k+1}} - \pi_{\theta_{k}}}\Vert_1^2. 
\end{align*}
Because $\eta(\pi_{\theta_{k}})$ is bounded, there exists a $\eta_*$ such that $\lim_{k\rightarrow\infty} \eta(\pi_{\theta_{k}})=\eta_*$. 
Thus, taking the limit of $k$ on both sides of the following equation, 
\begin{align*}
    0\le \frac{\alpha}{2}\Vert{\pi_{\theta_{k+1}} - \pi_{\theta_{k}}}\Vert_1^2 \le \eta(\pi_{\theta_{k+1}}) - \eta(\pi_{\theta_{k}}),
\end{align*}
we can obtain $\lim_{k\rightarrow\infty}\Vert{\pi_{\theta_{k+1}} - \pi_{\theta_{k}}}\Vert_1 = 0$. 
Since the parameter space $\Theta$ is compact, the sequence $\{\pi_{\theta_k}\} $ converges to some limit point $\pi_{\theta_*}$.

\end{proof}


\section{Prompt setting}\label{apx: about experiment}

\begin{table}[ht]
    \centering
    \caption{Prompt setting. }
    \label{prompt}
    \begin{tabular}{p{12cm}}
        \toprule
        \textbf{SYSTEM:} 
        Solve the question. The user asks a question, and you solves it. You first thinks about the reasoning process in the mind and then provides the user with the answer. The answer is in latex format and wrapped in \$...\$. The final answer must be wrapped using the \textbackslash boxed\{\} command. The reasoning process and answer are enclosed within \textless think\textgreater \textless /think\textgreater \;and \textless answer\textgreater \textless /answer\textgreater \;tags, respectively, i.e., \textless think\textgreater Since $1+1=2$, so the answer is $2$. \textless /think\textgreater \textless answer\textgreater The answer is \$\textbackslash boxed\{2\}\$ \textless /answer\textgreater, which means the final answer assistant's output should start with \textless answer\textgreater \;and end with \textless /answer\textgreater.
        \\
        \textbf{USER:} \textless image\textgreater \textcolor{red}{\{\{question\}\}} \\
        \bottomrule
    \end{tabular}
\end{table}

We follow the prompt format from DeepSeek-R1, where reasoning steps and final answers are explicitly marked using \texttt{<think>} and \texttt{<answer>} tags, respectively. The full prompt template is provided in Table~\ref{prompt}.

\section{Limitations}\label{apx: limitation}

While this work introduces a stable and effective RL method for LMs training, it has several limitations: (1) For the weighted advantage component, we conducted only preliminary experiments and did not thoroughly explore the impact of different weighting factors. Our results suggest that using non-uniform weights yields better performance than trivial equal weighting, but further investigation is needed. (2) Our study focuses on on-policy training; we leave off-policy settings—where importance sampling is typically required—for future work. Ensuring training stability in the presence of importance sampling remains an open question. (3) All experiments were conducted on standard academic-scale models (7B parameters). We did not evaluate our method on larger models (e.g., 100B+), which would require significant computational resources.

\end{document}